\newtheoremstyle{exampstyle}
  {3pt} 
  {3pt} 
  {\itshape} 
  {} 
  {\bfseries} 
  {.} 
  {.5em} 
  {} 
\theoremstyle{exampstyle} 
\newtheorem{definition}{Definition}
\newtheorem{lemma}{Lemma}
\newtheorem{theorem}{Theorem}
\newtheorem{assumption}{Assumption}
\newtheorem{problem}{Problem}
\theoremstyle{plain}
\definecolor{mumred}{RGB}{222,33,77}
\definecolor{mumgreen}{RGB}{0, 140, 0}
\definecolor{mumblue}{RGB}{0, 100, 222}
\definecolor{mumpurple}{RGB}{128, 0, 128}
\newcommand{\ofx}{\left(\bm{x}\right)}
\newcommand{\ofxt}{\left(\bm{x}_t\right)}
\newcommand{\ofxtau}{\left(\bm{x}_{\tau}\right)}
\newcommand{\dbetat}{\text{d}\bm{\beta}_t}
\newcommand{\dbetatau}{\text{d}\bm{\beta}_{\tau}}
\newcommand{\textmin}{$\mathrm{min}~$}
\newcommand{\textmax}{$\mathrm{max}~$}
\DeclareMathOperator*{\argmin}{arg\,min}
\title{\LARGE \bf
	Non-smooth Control Barrier Functions for Stochastic Dynamical Systems
}
\author{Matti Vahs and Jana Tumova
	\thanks{This work was partially supported by the Wallenberg AI, Autonomous
		Systems and Software Program (WASP) funded by the Knut and Alice
		Wallenberg Foundation. This research has been carried out as part of the Vinnova Competence Center for Trustworthy Edge Computing Systems and Applications at KTH Royal Institute of Technology.}
	\thanks{The authors are with the Division of Robotics, Perception and Learning, School of Electrical Engineering and Computer Science , KTH Royal Institute of Technology, Stockholm, Sweden and also affiliated with Digital Futures. Mail addresses: {\{\tt\small vahs, tumova\}}
		{\tt\small @kth.se}}%
}
\begin{document}
	\maketitle
	\thispagestyle{empty}
	\pagestyle{empty}

	\begin{abstract}
        Uncertainties arising in various control systems, such as robots that are subject to unknown disturbances or environmental variations, pose significant challenges for ensuring system safety, such as collision avoidance. At the same time, safety specifications are getting more and more complex, e.g., by composing multiple safety objectives through Boolean operators resulting in non-smooth descriptions of safe sets. Control Barrier Functions (CBFs) have emerged as a control technique to provably guarantee system safety. In most settings, they rely on an assumption of having deterministic dynamics and smooth safe sets. This paper relaxes these two assumptions by extending CBFs to encompass control systems with stochastic dynamics and safe sets defined by non-smooth functions. By explicitly considering the stochastic nature of system dynamics and accommodating complex safety specifications, our method enables the design of safe control strategies in uncertain and complex systems. We provide formal guarantees on the safety of the system by leveraging the theoretical foundations of stochastic CBFs and non-smooth safe sets. Numerical simulations demonstrate the effectiveness of the approach in various scenarios.
	\end{abstract}

	\section{INTRODUCTION}
    Control Barrier Functions (CBFs) provide a powerful framework for designing controllers that guarantee system safety by imposing constraints on the system's state variables. These constraints, which describe a set of safe states, act as virtual barriers preventing the system from entering undesirable regions in its state space \cite{ames2019control}. Many of the existing approaches in the CBF literature have focused on deterministic systems, assuming precise knowledge of the system dynamics \cite{ames2019control, grandia2021multi, xu2018safe, cortez2019control}. While these results have been instrumental in addressing safety concerns in such systems, real-world applications often involve stochastic dynamics, introducing uncertainty that cannot be ignored \cite{thrun2005probabilistic}. Moreover, the definition of safe sets has traditionally been restricted to smooth functions, limiting their applicability in scenarios where safe sets are defined by non-smooth functions, such as those constructed through Boolean compositions.
    
    This paper aims to bridge these gaps by extending the concept of Control Barrier Functions to settings with stochastic dynamics and safe sets defined by non-smooth functions. We present a novel approach that combines the theoretical foundations of stochastic control barrier functions (SCBFs) \cite{clark} and non-smooth safe sets, enabling the design of safe control strategies for complex systems operating in uncertain environments.

    To illustrate the importance and relevance of this work, let us consider a motivating example involving a quadrotor flying in an unstructured environment as shown in Fig.~\ref{fig:Intro}. The drone is equipped with sensors for estimating its state. However, due to unpredictable wind forces, the dynamics of the system become stochastic in nature. The wind forces introduce uncertainties that cannot be ignored in the control design process, making it crucial to account for the stochastic dynamics to ensure collision avoidance. A natural way of ensuring safety is to keep the distance to obstacles greater than zero. However, this distance function (captured e.g., by a Euclidean Signed Distance Field (ESDF)) is non-smooth due to multiple non-smooth obstacles, as illustrated in Fig.~\ref{fig:Intro}.
    The ESDF is a continuous function that is positive in the collision-free space but has a discontinuous gradient in certain regions of the state space.

    \begin{figure}[t]
    \centering
    \includegraphics[scale=1.4]{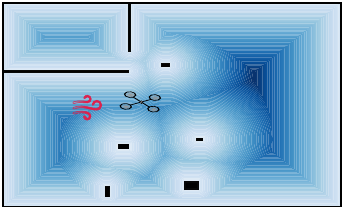}
    \caption{Illustration of a drone operating in a cluttered environment represented as a 2D occupancy grid map. The safe set is defined by an ESDF which is positive everywhere in the collision-free space. A stochastic wind disturbance acting on the drone is colored in red.}
    \label{fig:Intro}
    \vspace{-0.5cm}
\end{figure}
    Our contributions can be summarized as follows: we extend the theoretical analysis of Stochastic CBFs, proposed in \cite{clark}, to settings with non-smooth safe sets, offering a comprehensive solution for ensuring safety in such uncertain and complex systems. Specifically, we provide a control framework that
    \begin{enumerate}
        \item offers provable safety guarantees under stochastic dynamics as well as non-smooth safe sets, 
        \item enables users to encode desired system behavior as Boolean compositions
        \item is evaluated in various challenging simulation scenarios.
    \end{enumerate}
	\subsection{Related Work}
    CBFs have been explored extensively in the case of deterministic systems. They have been shown to work well even in a variety of real-world robotic scenarios such as legged locomotion \cite{grandia2021multi}, autonomous drone flight \cite{xu2018safe} or robotic grasping~\cite{cortez2019control}. Although safety has been ensured in these applications, precise knowledge about the state as well as the system dynamics are required which is often not given for robots in the wild. 
    Furthermore, one of the key assumptions in classic CBF formulations is that the safe set is described by a continuously differentiable function to ensure forward invariance \cite{ames2019control}. This assumption can be limiting with increasing complexity of safety specifications such as avoidance of non-smooth obstacles.

    To account for potential non-smoothnesses in safe sets, \cite{glotfelter2017nonsmooth} proposed non-smooth barrier functions (NBFs) for deterministic systems in which generalized gradients are leveraged to ensure forward invariance. These results have been used in several problems where non-smooth safe sets naturally occur such as network connectivity maintenance \cite{ong2021network} or visual coverage \cite{funada2020visual}. Further, CBFs have been designed in \cite{8404080} for non-smooth Signal Temporal Logic (STL) specifications by using smooth approximations.
    However, applying non-smooth CBFs to systems with uncertain dynamics cannot ensure safety due to the assumption of precise knowledge of the system dynamics.

    In \cite{clark} and \cite{santoyo2021barrier}, stochastic CBFs (SCBFs) have been proposed which provide safety guarantees almost surely for uncertain systems that are described by stochastic differential equations (SDEs). The foundation of SCBFs is based on the theory of martingales which we build upon in our work. Recently, risk-aware CBFs (RA-CBFs) \cite{black2023safety} have been introduced which provide probabilistic safety guarantees for stochastic systems which deviate from the martingale theory. In their work, the probability of leaving the safe set is bounded over a limited time horizon. Whilst stochastic CBF approaches have been leveraged in various settings such as safe learning \cite{pereira2021safe} and certification of neural network controllers \cite{mazouz2022safety}, they rely on twice continuously differentiable barrier functions to ensure safety. In \cite{singletary2022safe}, risk CBFs are defined for discrete-time stochastic dynamics and Boolean composed safe sets which, however, do not provide theoretical safety guarantees with probability one. 
    To the authors best knowledge, the combination of non-smooth safe sets and continuous-time stochastic systems has not been explored yet.

    As CBFs are often seen as the \emph{dual} of Control Lyapunov Functions (CLFs), progress in that field has to be considered as well. In \cite{florchinger1997feedback}, CLFs for stochastic systems are introduced and extended to non-smooth functions in \cite{ge2012non}. The theoretical results in \cite{ge2012non} are based on a partition of the state space into areas in which the CLF is smooth and the resulting controller is used for stabilization of quantum filters. In contrast to their control approach, we derive a general formulation expressed as a quadratic program (QP) that can be applied to various systems described by SDEs.
	\section{Preliminaries}
    Consider a general nonlinear SDE in It\^o form
    \begin{align}
    \mathrm{d}\bm{x}_t &= \bm{a}(\bm{x}_t) \text{d}t + \bm{b}(\bm{x}_t) \dbetat
    \label{eq:GeneralSDE}
    \end{align}
    where $\bm{a}\ofx \in \mathbb{R}^n$ is the drift term, $\bm{b}\ofx \in \mathbb{R}^{n \times \ell}$ is the diffusion term and $\bm{\beta}_t \in \mathbb{R}^{\ell}$ denotes $\ell$ dimensional Brownian motion. In this work, we assume that the drift term can be discontinuous as an effect of switching control laws. Next, we state assumptions which ensure that Eq.~\eqref{eq:GeneralSDE} has a solution.
    \begin{assumption}
        Suppose that
        \begin{enumerate}
            \item the drift term $\bm{a}\ofx$ is piecewise Lipschitz with its exceptional set $\Theta$ being a $C^3$ hypersurface.
            \item the diffusion term $\bm{b}\ofx$ is a non-degenerate diagonal matrix
            \item the diffusion term $\bm{b}\ofx$ is globally Lipschitz.
        \end{enumerate}
        \label{assmp:SDE}
    \end{assumption}
    The exceptional set $\Theta$ is an $n-1$ dimensional submanifold of $\mathbb{R}^n$ consisting of finitely many components which describes where the discontinuities of the drift term occur.
    These assumptions are needed to make statements about the existence and uniqueness of solutions to Eq.~\eqref{eq:GeneralSDE}.
    \begin{theorem}[Thm. 3.21, \cite{leobacher2017strong}]
        Under Assumption 1, the SDE in Eq.~\eqref{eq:GeneralSDE} has a unique global strong solution.
    \end{theorem}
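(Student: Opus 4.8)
The plan is to follow a transformation-based strategy: since the drift $\bm{a}\ofx$ is discontinuous only across the $C^3$ hypersurface $\Theta$, I would remove the discontinuity by a smooth change of variables and then invoke the classical existence--uniqueness theorem for SDEs with globally Lipschitz coefficients. The idea, which underlies the cited result of \cite{leobacher2017strong}, is that the jump in the drift across $\Theta$ can be absorbed into the It\^o second-order correction term, provided the diffusion is non-degenerate in the direction normal to $\Theta$.

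First I would construct a transformation $G:\mathbb{R}^n \to \mathbb{R}^n$ of the form $G\ofx = \bm{x} + \bm{u}\ofx$, where $\bm{u}$ is supported in a tubular neighborhood of $\Theta$ and is designed so that the discontinuity of $\nabla G\, \bm{a}$ across $\Theta$ is cancelled. Applying It\^o's formula to $\bm{z}_t = G(\bm{x}_t)$ yields a drift of the form $\nabla G\, \bm{a} + \tfrac{1}{2}\,\mathrm{tr}(\bm{b}^\top \nabla^2 G\, \bm{b})$, and the point is that the second-order term, through the non-degeneracy of $\bm{b}$, can be made to match the jump of $\nabla G\, \bm{a}$ along $\Theta$. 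This is precisely where Assumption~1.2 enters: a non-degenerate diagonal $\bm{b}$ guarantees that the normal component of the diffusion is nonzero, so the correction term is able to compensate the drift discontinuity.

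Second, I would verify that for a suitable choice of $\bm{u}$ the transformed coefficients $\tilde{\bm{a}} = (\nabla G\, \bm{a} + \tfrac{1}{2}\,\mathrm{tr}(\bm{b}^\top \nabla^2 G\, \bm{b})) \circ G^{-1}$ and $\tilde{\bm{b}} = (\nabla G\, \bm{b}) \circ G^{-1}$ are globally Lipschitz. The $C^3$ regularity of $\Theta$ (Assumption~1.1) ensures that the tubular-neighborhood projection and the normal vector field are smooth enough for $G$ to be $C^1$ with Lipschitz derivative; the piecewise-Lipschitz property of $\bm{a}$ together with the global Lipschitzness of $\bm{b}$ (Assumption~1.3) gives Lipschitz continuity away from $\Theta$, and the cancellation extends it across $\Theta$. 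I would then show that, choosing $\bm{u}$ small enough in $C^1$ norm, $\nabla G$ stays invertible so that $G$ is a global bi-Lipschitz homeomorphism with Lipschitz inverse.

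With these properties, the classical theorem yields a unique global strong solution $\bm{z}_t$ of the transformed SDE, and $\bm{x}_t = G^{-1}(\bm{z}_t)$ is the unique global strong solution of Eq.~\eqref{eq:GeneralSDE}; uniqueness transfers back because $G$ is a bijection and applying It\^o's formula to $G^{-1}(\bm{z}_t)$ recovers the original equation. The main obstacle is the explicit construction of $\bm{u}$ and the proof that the It\^o correction exactly cancels the drift discontinuity while keeping $G$ bi-Lipschitz; this is the technical core carried out in \cite{leobacher2017strong}, and it is here that the geometry of $\Theta$ and the non-degeneracy of $\bm{b}$ are used in an essential way.
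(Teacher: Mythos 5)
The paper offers no proof of this theorem: it is imported verbatim from \cite{leobacher2017strong}, the authors only remarking that their Assumption~1 strengthens the hypotheses there (a non-degenerate diagonal $\bm{b}$ forces the diffusion to have a component transversal to $\Theta$). Your sketch is correct and reproduces essentially the same argument as the cited source's proof --- the transformation $G = \mathrm{id} + \bm{u}$ supported near $\Theta$ whose It\^o correction term cancels the drift jump, verification that the transformed coefficients are globally Lipschitz and that $G$ is bi-Lipschitz with $C^1$ regularity, classical well-posedness for the transformed SDE, and transfer back via $G^{-1}$ --- so there is nothing to flag beyond the technical construction you already defer to \cite{leobacher2017strong}.
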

    Note that our assumptions are stronger than the ones proposed in \cite{leobacher2017strong} since we assume that $\bm{b}$ is a non-degenerate diagonal matrix which is motivated by the application of robotics problems in which additive Gaussian noise is assumed on each state dimension. As a consequence of $\bm{b}$ being diagonal, it is ensured that the diffusion term always has an orthogonal component to $\Theta$. 

    Since a solution to Eq.~\eqref{eq:GeneralSDE} exists, we can apply It\^o's Lemma during a period in which the drift term is Lipschitz which is formally defined hereafter.
    \begin{definition}[It\^o's Lemma]
        Let $\varphi \ofx$ be a twice-differentiable function and $\bm{x}_t$ is a strong solution to Eq.~\eqref{eq:GeneralSDE} for all $t \in [t_i, t_{i+1}]$ in which the drift term is Lipschitz, then
        \begin{align*}
            \varphi\left(\bm{x}_t\right) &= \varphi\left(\bm{x}_{t_i}\right) &&+ \int_{t_i}^{t_{i+1}} \frac{\partial \varphi}{\partial \bm{x}} \bm{a}(\bm{x}) + \frac{1}{2} \mathrm{tr}\left[\bm{b}^T \frac{\partial^2 \varphi}{\partial \bm{x}^2} \bm{b}\right] \text{d} \tau\\
            &~ &&+ \int_{t_i}^{t_{i+1}} \bm{b} \frac{\partial \varphi}{\partial \bm{x}} \text{d}\bm{\beta}_{\tau}.
        \end{align*}
    \end{definition}

    \section{Problem Statement}
    We consider a stochastic system in control-affine form
    \begin{align}
        \mathrm{d}\bm{x}_t &= \left(\bm{f}\ofxt + \bm{g}\ofxt \bm{u}_t\right) \text{d}t + \bm{\sigma}\ofxt \dbetat
    \label{eq:SDE}
    \end{align}
    where $\bm{x} \in \mathcal{X} \subseteq \mathbb{R}^n$ is the state, $\bm{u} \in \mathcal{U} \subseteq \mathbb{R}^m$ is the control input and $\bm{\sigma} \in \mathbb{R}^{n\times \ell}$ is the diffusion term. We assume that the drift and diffusion terms satisfy Assumption~\ref{assmp:SDE}.

    We require our system to satisfy a safety constraint which is expressed as an invariance property where the state is supposed to remain in a safe set $\mathcal{C}$ at all times, i.e. $\bm{x}_t \in \mathcal{C} \hspace{0.2cm} \forall t\geq 0$.
    The safe sets we consider are defined as
    \begin{align}
    \mathcal{C} &=\left\{\bm{x} \in \mathcal{X} \mid h \ofx \geq 0\right\}\label{eq:safeset}\\
    \partial \mathcal{C} &=\left\{\bm{x} \in \mathcal{X} \mid h \ofx = 0\right\}\label{eq:safesetboundary}
    \end{align}
    where the safety function $h \ofx$ is potentially non-smooth, e.g. a function consisting of nested \textmin and \textmax operators. 
    \begin{problem}
    \label{prob:invariance}
        Given the stochastic system in Eq.~\eqref{eq:SDE} and a non-smooth safe set defined by \eqref{eq:safeset}-\eqref{eq:safesetboundary}, find a control input $\bm{u}$ that renders $\mathcal{C}$ forward invariant with probability one.
    \end{problem}
	
	\section{Non-smooth Stochastic CBFs}
    We present our solution to Problem \ref{prob:invariance} which is based on the construction of a CBF that is valid in the presence of stochastic uncertainties as well as non-smooth safe sets. To that end, we define a partition of the state space in which each partition defines an area with a smooth safety function. First, we show that forward invariance can be ensured during the evolution within one partition by using a modified version of the proof proposed in \cite{clark} and, second, we show that transitions between partitions do not affect the forward invariance property.
    \begin{definition}
        Let $h\left(\bm{x}\right)$ be a partition-based function of the form
        \begin{equation*}
            h\ofx = h_i \ofx, \hspace{0.5cm} \bm{x} \in \Phi_i, i \in \left\{1, 2, \dots, N\right\}
        \end{equation*}
        where
        \begin{enumerate}
            \item $h_i \colon \mathcal{X} \subseteq \mathbb{R}^n \mapsto \mathbb{R}$ is twice continuously differentiable for all $i \in \left\{1, 2,\dots,N\right\}$
            \item $\left\{\Phi_i\right\}_{i=1}^N$ is a partition of the state space, i.e. $\cup_{i=1}^N \Phi_i = \mathcal{X}$ and $\Phi_i \cap \Phi_j = \emptyset, \forall i \neq j$
            \item $h\ofx$ is continuous, i.e. $h_i\ofx = h_j \ofx \forall \bm{x} \in \partial \Phi_{ij}$ where $\partial \Phi_{ij}$ is the boundary between $\Phi_i$ and $\Phi_j$.
        \end{enumerate}
    \end{definition}
    An exemplary partition of a state space is illustrated in Fig.~\ref{fig:Example}. In order to ensure existence of a solution to the SDE in Eq.~\eqref{eq:SDE}, it is crucial that the state trajectory does not keep moving on the boundaries. 
    \begin{lemma}
    For a boundary $\partial \Phi_{ij}$ that is described by a function $p(\bm{x}) = 0$ it holds that for any $\varepsilon > 0$,
    $\mathrm{Pr}\left[p(\bm{x}_t)=0, \hspace{0.2cm}\forall t \in [t_0, t_0+\epsilon]\right] = 0$.
        \label{lemma:boundary}
    \end{lemma}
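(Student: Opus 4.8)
The plan is to track the scalar process $Y_t := p(\bm{x}_t)$ and to show that, because the diffusion drives $\bm{x}_t$ transversally to the boundary, $Y_t$ behaves locally like a time-changed Brownian motion and hence cannot stay pinned at zero over an interval of positive length. First I would use that $\partial\Phi_{ij}$, being a regular hypersurface described by $p\ofx = 0$, satisfies $\frac{\partial p}{\partial \bm{x}}\neq \bm{0}$ on $\{p=0\}$. Applying It\^o's Lemma to $p$ over a window in which the drift is Lipschitz, $Y_t$ is a continuous semimartingale whose martingale (diffusion) part has coefficient $\frac{\partial p}{\partial \bm{x}}\bm{\sigma}\ofxt$.

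Second, I would show this diffusion coefficient does not vanish on the boundary. Since $\bm{\sigma}$ is a non-degenerate diagonal, hence invertible, matrix (Assumption~\ref{assmp:SDE}), $\frac{\partial p}{\partial \bm{x}}\bm{\sigma}=\bm{0}$ would force $\frac{\partial p}{\partial \bm{x}}=\bm{0}$, contradicting regularity of the hypersurface; thus $\lvert\frac{\partial p}{\partial \bm{x}}\bm{\sigma}\ofx\rvert^2 > 0$ for every $\bm{x}\in\{p=0\}$.

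Third, and this is the crux, I would argue by contradiction via quadratic variation. On the event $A=\{p(\bm{x}_t)=0 \ \forall t\in[t_0,t_0+\epsilon]\}$ the path $t\mapsto Y_t$ is constant, so its quadratic variation increment over $[t_0,t_0+\epsilon]$ is zero. However, this increment depends only on the martingale part, namely $\int_{t_0}^{t_0+\epsilon}\lvert\frac{\partial p}{\partial \bm{x}}\bm{\sigma}\ofxtau\rvert^2\,\mathrm{d}\tau$, and on $A$ the trajectory remains on $\{p=0\}$ where the integrand is strictly positive, so the increment is strictly positive almost surely on $A$. This contradiction forces $\mathrm{Pr}[A]=0$. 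Equivalently, by the Dambis--Dubins--Schwarz representation the martingale part is a Brownian motion run under a strictly increasing clock, and Brownian motion is nowhere locally constant.

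The main obstacle I anticipate is the interplay between the discontinuous drift and It\^o's Lemma exactly at the boundary, since $\partial\Phi_{ij}$ is where switching (and a possible drift discontinuity) occurs while the Lemma as stated requires a Lipschitz drift. I would sidestep this by noting that the only quantity the contradiction uses, the quadratic variation, is determined solely by the diffusion term and is insensitive to the bounded-variation drift part, which cannot cancel the infinite-variation martingale part over any interval. A secondary point is to confirm that the orthogonality and non-degeneracy of $\bm{\sigma}$ claimed for the exceptional set $\Theta$ transfers to the partition boundaries $\partial\Phi_{ij}$; this follows from the same diagonal non-degeneracy assumption.
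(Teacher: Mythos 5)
Your proposal is correct, and its crux differs from the paper's. The paper also begins by applying It\^o's Lemma to $p(\bm{x}_t)$, but then argues heuristically: if the path stays on $\{p=0\}$, the left-hand side vanishes, so the It\^o integral would have to equal (the negative of) the drift integral, and since Brownian increments are zero-mean Gaussians, the probability of the stochastic integral hitting a prescribed value is zero. You instead run the standard quadratic-variation contradiction: on the event $A$ the path $Y_t = p(\bm{x}_t)$ is constant, so $[Y]$ does not increase over $[t_0, t_0+\epsilon]$, yet $[Y]_{t_0+\epsilon}-[Y]_{t_0} = \int_{t_0}^{t_0+\epsilon} \bigl\lvert \frac{\partial p}{\partial \bm{x}} \bm{\sigma}(\bm{x}_\tau)\bigr\rvert^2 \mathrm{d}\tau > 0$ on $A$, forcing $\mathrm{Pr}[A]=0$. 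Your route buys two things the paper's argument glosses over. First, you explicitly verify that the diffusion coefficient $\frac{\partial p}{\partial \bm{x}}\bm{\sigma}$ is nonvanishing on the boundary (via the non-degenerate diagonal $\bm{\sigma}$ and regularity of the hypersurface); the paper's ``Gaussian cannot hit a deterministic value'' step silently requires exactly this non-degeneracy --- if $\frac{\partial p}{\partial \bm{x}}\bm{\sigma}$ vanished on $\{p=0\}$, the It\^o integral would be identically zero and the conclusion would fail --- and the paper only gestures at it in the remark after Assumption~\ref{assmp:SDE}. Second, you correctly flag and handle the fact that on $A$ the trajectory lives precisely where the drift may be discontinuous, so the paper's It\^o Lemma as stated (requiring a Lipschitz-drift window) does not literally apply; your observation that the quadratic variation is insensitive to the bounded-variation drift part resolves this cleanly, since It\^o's formula holds for the semimartingale $\bm{x}_t$ regardless of continuity of the drift coefficient once a strong solution exists (Theorem~1). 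The paper's argument is shorter and more intuitive; yours is the rigorous version one would want in a journal-length treatment, and the Dambis--Dubins--Schwarz remark is a valid alternative phrasing of the same contradiction.
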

    \begin{proof}
        Let us consider the general SDE in Eq.~\eqref{eq:GeneralSDE} and an arbitrary boundary $\partial \Phi_{ij}$. Showing that $\mathrm{Pr}\left[p(\bm{x}_t)=0, \hspace{0.2cm}\forall t \in [t_0, t_0+\epsilon]\right] = 0$ for any $\epsilon>0$ concludes that the solution of (1) almost surely does not evolve on a boundary.

        By applying It\^o's Lemma in integral form, we get
        \begin{align*}
            p\left(\bm{x}_{t_0 + \epsilon}\right) - p\left(\bm{x}_{t_0}\right) &= \int_{t_0}^{t_0 + \epsilon} \frac{\partial p}{\partial \bm{x}} \bm{a}(\bm{x}) + \frac{1}{2} \mathrm{tr}\left[\bm{b}^T \frac{\partial^2 p}{\partial \bm{x}^2} \bm{b}\right] \text{d} \tau\\
            &+ \int_{t_0}^{t_0 + \epsilon} \bm{b} \frac{\partial p}{\partial \bm{x}} \text{d}\bm{\beta}_{\tau}
            \end{align*}
            where the left hand side is zero, as both states are assumed to be on the edge $\partial \Phi_{ij}$. The first integral on the right hand side is deterministic while the second one is a stochastic It\^o integral. Thus in order for the equation to hold, the It\^o integral needs to take a specific value. However, since any increment $\text{d}\bm{\beta}$ is a zero mean Gaussian random variable, the probability of taking on a deterministic value is zero and, thus, $\mathrm{Pr}\left[p(\bm{x}_t)=0, \hspace{0.2cm}\forall t \in [t_0, t_0+\epsilon]\right] = 0$.
    \end{proof}
    This Lemma is crucial for the design of non-smooth stochastic CBFs. In comparison with deterministic non-smooth CBFs \cite{glotfelter2017nonsmooth}, we do not need to consider the generalized gradient in non-smooth points because we always have an instantaneous switch between two smooth regions of the safe set. 
    Next, we introduce our non-smooth stochastic CBFs (NSCBFs) for safe control under stochastic uncertainties. 
    \begin{definition}
    \label{def:NSCBF}
        Given a safe set $\mathcal{C}$ defined by \eqref{eq:safeset}-\eqref{eq:safesetboundary}, the partition-based function $B\ofx$ serves as a NSCBF for the system \eqref{eq:SDE}, if $\forall \bm{x} \in \Phi_i, \forall i \in \left\{1, 2, \dots, N\right\}$ 
        \begin{enumerate}
            \item there exist class-K functions $\alpha_1$ and $\alpha_2$ such that 
            \begin{align}
                \label{eq:SCBF1}
                \frac{1}{\alpha_1\left(h_i\left(\bm{x}\right)\right)} \leq \frac{1}{B_i\ofx} \leq \frac{1}{\alpha_2\left(h_i\left(\bm{x}\right)\right)},
            \end{align}
            \item there exists a class-K function $\alpha_3$ and a $\bm{u} \in \mathcal{U}$ such that
            \begin{align*}
            \frac{\partial B_i}{\partial \bm{x}} \left(\bm{f}\ofx + \bm{g} \ofx \bm{u}\right) + \frac{1}{2} \mathrm{tr}\left[\bm{\sigma}^T \frac{\partial^2 B_i}{\partial \bm{x}^2} \bm{\sigma}\right]\\
            \leq \alpha_3(h_i\ofx).
        \end{align*}
        \end{enumerate}
    \end{definition}
    \begin{figure}[t]
    \centering
    \resizebox{1\linewidth}{!}{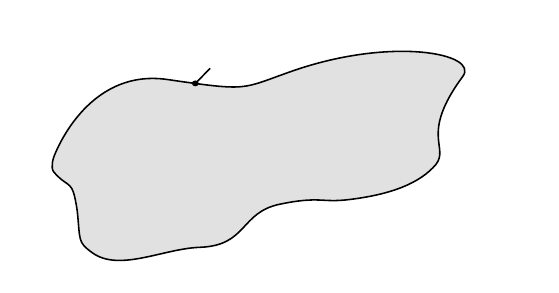}
    \caption{Illustration of the partition-based CBF in a two dimensional state space $\mathcal{X}$. The dashed lines indicate the edges between partitions $\Phi_i$ and $\Phi_j$. An exemplary trajectory satisfying the safety constraint is shown in blue while the states at exit times $\tau_k$ are colored in red.}
    \label{fig:Example}
    \vspace{-0.5cm}
\end{figure}
    In the smooth case \cite{clark}, stochastic CBFs have been proposed to ensure forward invariance. We extend those results to non-smooth domains which is shown subsequently. 

    Our proof of forward invariance for NSCBFs is structured in two parts. First, we show that forward invariance can be ensured if a system trajectory stays within one partition $\Phi_i$ of the state space in which the overall stochastic CBF is smooth. Second, we show that transitions in between partitions $\Phi_i$ and $\Phi_j$ do not affect the forward invariance of the safe set.
    
    Under Assumption \ref{assmp:SDE}, it is ensured that if the system trajectory $\bm{x}_t$ intersects the boundary $\partial \Phi_{ij}$ 
    it leaves the boundary instantly, almost surely, see Lemma \ref{lemma:boundary}. Thus, we can define exit times $\tau_i$ at which the system trajectory transitions from a partition $\Phi_i$ to $\Phi_j$ which yields a sequence of time points $\left\{\tau_i\right\}_{i=1,2,\dots}$ at which $\bm{x}$ crosses $\partial \Phi$, see Figure~\ref{fig:Example}.
    \begin{theorem}
        If a locally Lipschitz control input $\bm{u}_t$ satisfies Def. \ref{def:NSCBF} for a given safe set $\mathcal{C}$, then $\mathrm{Pr}\left[\bm{x}_t \in \mathcal{C} \cap \Phi_i, \hspace{0.2cm} \forall t \in [\tau_i, \tau_{i+1})\right]=1$, provided that $\bm{x}_{\tau_i} \in \mathcal{C} \cap \Phi_i$.
        \label{thm:ForwardInvSmooth}
    \end{theorem}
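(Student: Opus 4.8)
The plan is to reduce the claim to a single statement: while the trajectory remains in the partition $\Phi_i$, it almost surely never crosses from the safe region into $\{h_i\ofx < 0\}$. By the very definition of the exit time $\tau_{i+1}$, the trajectory stays in $\Phi_i$ throughout $[\tau_i,\tau_{i+1})$, so the set-membership part $\bm{x}_t\in\Phi_i$ is automatic and only $\bm{x}_t\in\mathcal{C}$, i.e. $h_i(\bm{x}_t)\geq 0$, must be established. On this interval the drift is Lipschitz (the discontinuities of $\bm{a}$ live only on partition boundaries, by Assumption~\ref{assmp:SDE}) and $B_i$ is twice continuously differentiable, so It\^o's Lemma applies to any smooth function of $\bm{x}_t$.

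The natural quantity to monitor is the reciprocal $1/B_i$. By condition (1) of Def.~\ref{def:NSCBF} we have $1/\alpha_1(h_i)\leq 1/B_i\leq 1/\alpha_2(h_i)$, so $1/B_i(\bm{x}_t)\to\infty$ exactly as $\bm{x}_t\to\partial\mathcal{C}$; hence it suffices to show that $1/B_i(\bm{x}_t)$ does not diverge in finite time almost surely. I would introduce the stopping time $\tau=\inf\{t\geq\tau_i : h_i(\bm{x}_t)\leq 0\}\wedge\tau_{i+1}$ together with a localizing sequence $\{\tau_k\}$ that caps $1/B_i$ at level $k$, so that Dynkin's formula can be applied rigorously to the stopped process $1/B_i(\bm{x}_{t\wedge\tau\wedge\tau_k})$ while all stochastic integrals are genuine zero-mean martingales. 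The hypothesis $\bm{x}_{\tau_i}\in\mathcal{C}\cap\Phi_i$ makes $1/B_i(\bm{x}_{\tau_i})$ finite; the borderline case $\bm{x}_{\tau_i}\in\partial\mathcal{C}$ is handled by the same It\^o-integral argument as in Lemma~\ref{lemma:boundary}, which forces the trajectory off the zero level set instantaneously and almost surely.

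The key computation is the generator of the reciprocal, $\mathcal{L}\big(1/B_i\big)=-B_i^{-2}\,\mathcal{L}B_i+B_i^{-3}\,\frac{\partial B_i}{\partial\bm{x}}\,\bm{\sigma}\bm{\sigma}^{T}\big(\tfrac{\partial B_i}{\partial\bm{x}}\big)^{T}$, where $\mathcal{L}B_i$ denotes exactly the left-hand side of condition (2) of Def.~\ref{def:NSCBF}. Substituting the generator bound from condition (2) and re-expressing $h_i$ through $B_i$ by the class-$\mathcal{K}$ sandwich of condition (1), I would show that $\mathcal{L}(1/B_i)$ is dominated by a class-$\mathcal{K}$ function of $1/B_i$ (in the cleanest case by zero, so that $1/B_i(\bm{x}_{t\wedge\tau\wedge\tau_k})$ is a nonnegative supermartingale). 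Dynkin's formula then gives an estimate of the form $\mathbb{E}[1/B_i(\bm{x}_{t\wedge\tau\wedge\tau_k})]\leq 1/B_i(\bm{x}_{\tau_i})+\int_{\tau_i}^{t}c\,\mathbb{E}[1/B_i(\bm{x}_{s\wedge\tau\wedge\tau_k})]\,\mathrm{d}s$, and a Gr\"onwall (or supermartingale-convergence) argument bounds $\mathbb{E}[1/B_i(\bm{x}_{t\wedge\tau})]$ uniformly on each bounded time interval. Since $1/B_i(\bm{x}_t)$ must blow up on the event $\{\tau<\tau_{i+1}\}$, this uniform bound forces $\mathrm{Pr}[\tau<\tau_{i+1}]=0$, i.e. $h_i(\bm{x}_t)\geq 0$ for all $t\in[\tau_i,\tau_{i+1})$ almost surely, which is the claim.

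The main obstacle is the second-order term $B_i^{-3}\,\frac{\partial B_i}{\partial\bm{x}}\,\bm{\sigma}\bm{\sigma}^{T}\big(\tfrac{\partial B_i}{\partial\bm{x}}\big)^{T}$ in $\mathcal{L}(1/B_i)$: it is nonnegative and grows without bound near $\partial\mathcal{C}$, pushing $1/B_i$ upward and working against the supermartingale property. Dominating it is precisely where condition (2) must be combined with the non-degenerate, globally Lipschitz structure of $\bm{\sigma}$ from Assumption~\ref{assmp:SDE}; a secondary but essential technical point is the careful localization, without which Dynkin's formula cannot be invoked on the open safe partition where the integrand is only controlled away from the boundary.
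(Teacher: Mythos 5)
Your skeleton (apply It\^o/Dynkin to a quantity that blows up at $\partial\mathcal{C}$, localize, show non-explosion before $\tau_{i+1}$) is a legitimate alternative in principle, but the proof stalls at precisely the step you yourself flag as the ``main obstacle,'' and that obstacle is not surmountable from the stated hypotheses. Condition (2) of Def.~\ref{def:NSCBF} is a \emph{one-sided} bound, $\mathcal{L}B_i \leq \alpha_3(h_i)$. In your generator identity $\mathcal{L}\bigl(1/B_i\bigr) = -B_i^{-2}\,\mathcal{L}B_i + B_i^{-3}\,\frac{\partial B_i}{\partial \bm{x}}\,\bm{\sigma}\bm{\sigma}^{T}\bigl(\frac{\partial B_i}{\partial \bm{x}}\bigr)^{T}$, the term $\mathcal{L}B_i$ enters with a \emph{negative} sign, so the available upper bound on $\mathcal{L}B_i$ yields only a lower bound on $\mathcal{L}(1/B_i)$; to run your supermartingale/Gr\"onwall argument you would need a lower bound on $\mathcal{L}B_i$, which Def.~\ref{def:NSCBF} does not supply. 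Worse, the It\^o correction term is nonnegative, and the non-degeneracy of $\bm{\sigma}$ in Assumption~\ref{assmp:SDE} makes it strictly positive wherever $\nabla B_i \neq 0$ --- non-degeneracy bounds this term \emph{away from} zero rather than dominating it, so invoking it is exactly backwards. Nothing in Def.~\ref{def:NSCBF} or Assumption~\ref{assmp:SDE} controls $B_i^{-3}\lvert\bm{\sigma}^{T}\nabla B_i\rvert^2$ near $\partial\mathcal{C}$, so the hoped-for domination ``in the cleanest case by zero'' has no basis; and even your weaker fallback, domination by a class-$\mathcal{K}$ function of $1/B_i$, would not give non-explosion, since Gr\"onwall requires (at most) linear growth and a general class-$\mathcal{K}$ bound is compatible with finite-time blow-up.

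The paper avoids this sign problem entirely by applying It\^o to $B_i$ itself, the quantity whose drift condition (2) actually controls. (Note that in the paper's proof, as in \cite{clark}, $B_i$ is the reciprocal-type barrier that blows up at $\partial\mathcal{C}$; the typeset form of Eq.~\eqref{eq:SCBF1}, with $1/B_i$ in the middle, is inconsistent with the direction of condition (2), and your literal reading of it --- tracking $1/B_i$ as the exploding quantity --- compounds the mismatch.) Concretely, the paper sets $\theta = B_i(\bm{x}_{\tau_i})$, introduces the stopping times $\eta_k, \zeta_k$ at which $B_i$ crosses level $\theta$ (needed because the static bound $\alpha_3(h_i(\bm{x}_\tau)) \leq \alpha_3\bigl(\alpha_2^{-1}(1/\theta)\bigr)$ is only valid while $B_i \geq \theta$), builds the auxiliary process $U_t$ that is frozen during excursions below $\theta$ and is a submartingale, proves $B_i(\bm{x}_t) \leq U_t$ by termwise comparison and induction over the excursion intervals, and then applies Doob's maximal inequality to obtain $\mathrm{Pr}\bigl[\sup_{t'<t} B_i(\bm{x}_{t'}) > K\bigr] \leq \bigl(\theta + (t-\tau_i)\,\alpha_3\bigl(\alpha_2^{-1}(1/\theta)\bigr)\bigr)/K \leq \delta$. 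No generator computation for a reciprocal, no Gr\"onwall, and no control of the gradient-squared term is ever needed. A secondary slip: Lemma~\ref{lemma:boundary} concerns the partition boundaries $\partial\Phi_{ij}$, not the safe-set boundary $\partial\mathcal{C}$, so it cannot be invoked to push the trajectory off $\partial\mathcal{C}$ at the initial time; like the paper, you should simply use the hypothesis $\bm{x}_{\tau_i} \in \mathcal{C} \cap \Phi_i$ with $B_i(\bm{x}_{\tau_i})$ finite.
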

    \begin{proof}  The proof of the theorem is strongly inspired by the proof of Theorem 1 in \cite{clark}. Therein, the theorem is formulated for a smooth safe set and for all times $t$.
        We show that for any partition $\Phi_i$, any $t\in [\tau_i, \tau_{i+1})$, any $\varepsilon > 0$ and any $\delta \in (0, 1)$,
        \begin{equation*}
            \mathrm{Pr}\left[\underset{t' <t}{\mathrm{sup}}~B_i\left(\bm{x}_{t'}\right) > K \right] \leq \delta
        \end{equation*}
        for some $K>0$ which would prove forward invariance almost surely in a period where $\bm{x}$ evolves in $\Phi_i$. Let $\theta = B_i\left(\bm{x}_{\tau_i}\right)$ and construct $K$ such that
        \begin{align*}
            K > \frac{\theta + (t-\tau_i)\alpha_3\left(\alpha_2^{-1}\left(\frac{1}{\theta}\right)\right)}{\delta}.
        \end{align*}
        During one time period in which $h\ofx=h_i\ofx$ is smooth, we can apply It\^o's lemma to obtain
        \begin{align*}
            B_i\ofxt = &B_i\left(\bm{x}_{\tau_i}\right) + \int_{\tau_i}^t\frac{\partial B_i}{\partial \bm{x}} \left(\bm{f}\ofxtau + \bm{g} \ofxtau \bm{u}_{\tau}\right)\\
            &+ \frac{1}{2} \mathrm{tr}\left[\bm{\sigma}_{\tau}^T \frac{\partial^2 B_i}{\partial \bm{x}^2} \bm{\sigma}_{\tau}\right] \text{d}\tau + \int_{\tau_i}^t \bm{\sigma}_{\tau} \frac{\partial B_i}{\partial \bm{x}} \dbetatau.
        \end{align*}
        where $\bm{\sigma}_{\tau} = \bm{\sigma}\ofxtau$. Consider a sequence of stopping times\footnote{An alternative proof of a similar theorem is available in \cite{so2023almost} that is not based on stopping times.} $\eta_k$ and $\zeta_k$ at which $h_i(\bm{x})$ intersects with $\theta$, i.e.
        \begin{align*}
            &\eta_0 = \tau_i, \hspace{0.5cm} \zeta_0 = \mathrm{inf} \left\{ t \mid B_i(\bm{x}_t)< \theta\right\}\\
            &\eta_k = \mathrm{inf} \left\{ t \mid B_i(\bm{x}_t)> \theta \land t > \zeta_{k-1}\right\}\\
            &\zeta_k = \mathrm{inf} \left\{ t \mid B_i(\bm{x}_t)< \theta \land t > \eta_{k-1}\right\}.
        \end{align*}
        which is constructed such that $B_i \ofxt \geq \theta, ~ \forall t \in [\eta_k, \zeta_k]$ and, vice versa, $B_i \ofxt \leq \theta, ~ \forall t \in [\zeta_k, \eta_{k+1}]$ for $k=0,1,\dots$.
        Further, we define a random process
        \begin{equation}
            U_t = U_0 + \sum_{k=0}^{\infty} \int_{\eta_k \land t}^{\zeta_k \land t} \alpha_3\left(\alpha_2^{-1}\left(\frac{1}{\theta}\right)\right) \text{d}\tau
            + \int_{\eta_k \land t}^{\zeta_k \land t} \bm{\sigma} \frac{\partial B_i}{\partial \bm{x}} \text{d}\bm{\beta}_{\tau}\label{eq:randomProcess}
        \end{equation}
        where $U_0 = \theta$ and $a \land b$ denotes the minimum between $a$ and $b$. This random process is a \emph{submartingale}, i.e. it satisfies $\mathbb{E}\left[U_t \mid U_s\right] \geq U_s$ which has been proven in \cite{clark}.
        
        We will show by induction that $B_i(\bm{x}_t) \leq U_t$ for all $t \in[\tau_i, \tau_{i+1}]$. At time $t=\tau_i=\eta_0$, both statements hold as $B_i(\bm{x}_{\tau_i}) = \theta = U_{\tau_i}$ by construction of stopping times. The induction starts for the interval $t \in [\eta_k, \zeta_{k}]$ and is finished with the interval $[\zeta_k, \eta_{k+1}]$. For the former interval, we have
        \begin{align}
            B_i\left(\bm{x}_t\right) &= &&B_i\left(\bm{x}_{\eta_k}\right) + \int_{\eta_k}^t\frac{\partial B_i}{\partial \bm{x}} \left(\bm{f}(\bm{x}_{\tau}) + \bm{g} (\bm{x}_{\tau}) \bm{u}_{\tau}\right)\label{eq:comparison1}\\
            &~ &&+ \frac{1}{2} \mathrm{tr}\left[\bm{\sigma}^T \frac{\partial^2 B_i}{\partial \bm{x}^2} \bm{\sigma}\right] \text{d}\tau + \int_{\eta_k}^t \bm{\sigma} \frac{\partial B_i}{\partial \bm{x}} \text{d}\bm{\beta}_\tau\label{eq:comparison2}\\
            U_t &= &&U_{\eta_k} + \int_{\eta_k}^t \alpha_3\left(\alpha_2^{-1}\left(\frac{1}{\theta}\right)\right) \text{d}\tau + \int_{\eta_k}^t \bm{\sigma} \frac{\partial B_i}{\partial \bm{x}} \text{d}\bm{\beta}_\tau\label{eq:comparison3}
        \end{align}
        which we compare by terms. For the first term $B_i\left(\bm{x}_{\eta_k}\right) = \theta \leq U_{\eta_k}$ holds as main assumption of the induction process while the last terms are common. For the middle term, we know that a control input $\bm{u}$ satisfying Def. \ref{def:NSCBF} results in
        \begin{align*}
            \frac{\partial B_i}{\partial \bm{x}} \left(\bm{f}(\bm{x}_{\tau}) + \bm{g} (\bm{x}_{\tau}) \bm{u}_{\tau}\right)
            + \frac{1}{2} \mathrm{tr}\left[\bm{\sigma}_{\tau}^T \frac{\partial^2 B_i}{\partial \bm{x}^2} \bm{\sigma}_{\tau}\right] \\
            \leq \alpha_3\left(h_i(\bm{x}_{\tau})\right)
            \leq \alpha_3\left(\alpha_2^{-1}\left(\frac{1}{B_i\ofxtau}\right)\right)\leq \alpha_3\left(\alpha_2^{-1}\left(\frac{1}{\theta}\right)\right)
        \end{align*}
        which can be obtained by inverting Eq.~\eqref{eq:SCBF1}. Thus, we conclude by comparison of terms between \eqref{eq:comparison1}-\eqref{eq:comparison2} and \eqref{eq:comparison3} that $B(\bm{x}_t) \leq U_t$ for all $t \in [\eta_k, \zeta_{k}]$.

        In the next interval $[\zeta_k, \eta_{k+1}]$, we observe that the integrands in Eq. \eqref{eq:randomProcess} are only ``active'' in periods $[\eta_k, \zeta_{k}]$ and thus $U_t = U_{\zeta_k}$ is constant. Therefore, it can be easily concluded that $U_t = U_{\zeta_k} \geq \theta \geq B(\bm{x}_t)$ for all $t\in [\zeta_k, \eta_{k+1}]$ which completes the proof by induction.

        Using this results, it holds that
        \begin{align*}
            \mathrm{Pr}\left[\underset{t' <t}{\mathrm{sup}}~B_i\left(\bm{x}_{t'}\right) > K \right] \leq \mathrm{Pr}\left[\underset{t' <t}{\mathrm{sup}}~U_{t'} > K \right]
        \end{align*}
        where the latter term is further bounded by applying Doob's martingale inequality \cite{doob}, reading
        \begin{align*}
            \mathrm{Pr}\left[\underset{t' <t}{\mathrm{sup}}~U_{t'} > K \right] \leq \frac{\mathbb{E}\left[U_t^+\right]}{K}
        \end{align*}
        where $U_t^+ = \mathrm{max}\left\{U_t, 0\right\}$. The expectation of the random process $U_t$ is given by
        \begin{align}
            \mathbb{E}\left[U_t\right] = U_0 + \mathbb{E}\left[\sum_{k=0}^{\infty} \int_{\eta_k \land t}^{\zeta_k \land t} \alpha_3\left(\alpha_2^{-1}\left(\frac{1}{\theta}\right)\right) \text{d}\tau\right]\label{eq:expectationRandomProcess}
        \end{align}
        as the expectation of an integral over Brownian motion vanishes under Assumption \ref{assmp:SDE}. Since the second term in Eq.~\eqref{eq:expectationRandomProcess} is upper bounded by $\alpha_3\left(\alpha_2^{-1}\left(\frac{1}{\theta}\right)\right) (t - \tau_i)$, we have 
        \begin{align*}
            \mathrm{Pr}\left[\underset{t' <t}{\mathrm{sup}}~U_{t'} > K \right] 
            &\leq \frac{\theta + \alpha_3\left(\alpha_2^{-1}\left(\frac{1}{\theta}\right)\right) (t - \tau_i)}{K}\leq \delta
        \end{align*}
        which completes our proof.
    \end{proof}
        
    Next, we show that forward invariance can be guaranteed almost surely for any $t \geq 0$. 
    \begin{theorem}
        If a locally Lipschitz control input $\bm{u}_t$ satisfies Def.~\ref{def:NSCBF} for a given safe set $\mathcal{C}$, then $\mathrm{Pr}\left[\bm{x}_t \in \mathcal{C}, \hspace{0.2cm} \forall t\geq t_0\right]=1$, provided that $\bm{x}(t_0) \in \mathcal{C}$.
    \end{theorem}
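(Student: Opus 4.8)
The plan is to stitch together the per-partition guarantee of Theorem~\ref{thm:ForwardInvSmooth} across the sequence of boundary crossings, using Lemma~\ref{lemma:boundary} to make the crossings well-defined isolated events and using the continuity of $h$ to transfer safety from one partition to the next. First I would fix a sample path and form the sequence of exit times $\{\tau_k\}_{k=0,1,\dots}$ with $\tau_0 = t_0$, where each $\tau_{k+1}$ is the first time after $\tau_k$ at which the trajectory crosses some boundary $\partial\Phi_{ij}$ and enters a new partition. By Lemma~\ref{lemma:boundary} the trajectory almost surely does not dwell on any boundary, so on each half-open interval $[\tau_k, \tau_{k+1})$ the state remains in a single partition $\Phi_{i_k}$ on which $h = h_{i_k}$ is twice continuously differentiable, which is exactly the regime in which Theorem~\ref{thm:ForwardInvSmooth} applies.

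Next I would run an induction on $k$. The base case uses the hypothesis $\bm{x}(t_0) \in \mathcal{C}$, which places $\bm{x}_{\tau_0}$ in $\mathcal{C} \cap \Phi_{i_0}$, so Theorem~\ref{thm:ForwardInvSmooth} on $[\tau_0, \tau_1)$ yields $\mathrm{Pr}[\bm{x}_t \in \mathcal{C} \cap \Phi_{i_0},\ \forall t \in [\tau_0, \tau_1)] = 1$. The inductive step is where continuity does the work: the solution has continuous sample paths, so taking the left limit of $h_{i_k}(\bm{x}_t) \geq 0$ as $t \uparrow \tau_{k+1}$ gives $h_{i_k}(\bm{x}_{\tau_{k+1}}) \geq 0$; since the crossing point lies on $\partial\Phi_{i_k i_{k+1}}$, condition~3 of the partition-based function definition gives $h_{i_k}(\bm{x}_{\tau_{k+1}}) = h_{i_{k+1}}(\bm{x}_{\tau_{k+1}})$, so $h(\bm{x}_{\tau_{k+1}}) \geq 0$ is preserved and $\bm{x}_{\tau_{k+1}} \in \mathcal{C} \cap \Phi_{i_{k+1}}$. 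This re-establishes the precondition of Theorem~\ref{thm:ForwardInvSmooth}, which then certifies safety on $[\tau_{k+1}, \tau_{k+2})$ with probability one.

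Finally I would assemble the global claim from the per-interval claims. Each event $A_k = \{\bm{x}_t \in \mathcal{C},\ \forall t \in [\tau_k, \tau_{k+1})\}$ holds with probability one, there are at most countably many of them, and $\bigcup_k [\tau_k, \tau_{k+1}) = [t_0, \infty)$, so by countable subadditivity applied to the complements the intersection $\bigcap_k A_k$ holds with probability one, giving $\mathrm{Pr}[\bm{x}_t \in \mathcal{C},\ \forall t \geq t_0] = 1$. The main obstacle I expect is making this covering argument rigorous: a priori the exit times could accumulate at a finite time $\tau_\infty$ (Zeno behavior), in which case $\bigcup_k [\tau_k, \tau_{k+1})$ would only cover $[t_0, \tau_\infty)$ and leave a gap. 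Ruling this out requires showing that, almost surely, only finitely many crossings occur in any bounded time window, which should follow from the $C^3$ regularity of the exceptional hypersurface together with the non-degenerate diagonal diffusion of Assumption~\ref{assmp:SDE} — whose component orthogonal to $\partial\Phi_{ij}$ pushes the trajectory transversally off each boundary — and from the existence of a unique global strong solution. I would treat this non-accumulation property as the crux, since the chaining and continuity arguments are otherwise routine.
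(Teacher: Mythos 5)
Your chaining argument is, in structure, exactly the paper's proof: the paper defines the intervals $I_k = [\tau_k, \tau_{k+1})$ between successive boundary crossings, runs the same induction showing $\mathrm{Pr}\left[\bm{x}_t \in \mathcal{C} \cap \Phi_k,\ \forall t \in I_k\right]=1$ propagates from $I_k$ to $I_{k+1}$, and uses precisely your continuity step --- $h_{k-1}(\bm{x}_{\tau_k}) = h_k(\bm{x}_{\tau_k}) \geq 0$ via continuity of $h$ and of the sample paths --- to re-establish the hypothesis of Theorem~\ref{thm:ForwardInvSmooth} at each crossing. Your closing countable-subadditivity step is left implicit in the paper but is the routine part.

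Where you go beyond the paper is the Zeno discussion, and there you have put your finger on a genuine gap: the paper's induction proves the claim ``for all positive integers $k$,'' which covers only $\bigcup_k I_k$, and the paper never argues that this union is all of $[t_0,\infty)$. However, your proposed repair points in the wrong direction. For a diffusion whose martingale part is non-degenerate transversally to the boundary --- which is exactly what the paper's non-degenerate diagonal assumption guarantees, as the paper itself remarks --- hitting a $C^3$ hypersurface is followed, almost surely, by infinitely many recrossings in every right-neighborhood of the hitting time: writing $p$ for a local defining function of $\partial\Phi_{ij}$, the process $p(\bm{x}_t)$ is a one-dimensional It\^o process with non-vanishing diffusion coefficient near the surface, and by the strong Markov property and the law of the iterated logarithm it changes sign infinitely often immediately after hitting zero (compare the zero set of Brownian motion, which is a.s.\ a perfect set). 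So ``almost surely finitely many crossings in any bounded window'' is false, and worse, the strictly increasing enumeration of exit times $\left\{\tau_k\right\}$ that both you and the paper rely on is not even well-defined, since $\inf\left\{t > \tau_k \mid \bm{x}_t \in \Phi_j\right\} = \tau_k$ a.s. Note that Lemma~\ref{lemma:boundary} does not help here: it only excludes the path \emph{dwelling} on the boundary over an interval, not accumulation of crossing times. Closing the gap would require a different device --- e.g., stopping times defined by exit from an $\varepsilon$-neighborhood of the boundary followed by a limit in $\varepsilon$, or working with the a.s.\ open random set $\left\{t \mid \bm{x}_t \notin \Theta\right\}$ decomposed into countably many maximal intervals --- so you were right to call this the crux, but the fix you sketch would fail, and the paper shares the same unclosed step.
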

    \begin{proof}
    We define a sequence of intervals $\left\{I_k\right\}_{k=0}^{\hat{n}(t)}$ with $I_k = [\tau_k, \tau_{k+1})$ where $\hat{n}(t)$ is the interval at the current time $t$. These intervals correspond the durations, the state spends in one of the partitions as shown in Fig.~\ref{fig:Example}. We will now show by induction that if $\mathrm{Pr}\left[\bm{x}_t \in \mathcal{C} \cap \Phi_k, \hspace{0.2cm} \forall t \in I_k\right]=1$, then it also holds true that $\mathrm{Pr}\left[\bm{x}_t \in \mathcal{C} \cap \Phi_{k+1}, \hspace{0.2cm} \forall t \in I_{k+1}\right]=1$.

    In the base case $k=0$, we know that for $\tau_0 = 0$ it holds that $h(\bm{x}_{\tau_0}) \geq 0$ since we always start within the safe set. Additionally we know by application of Thm.~\ref{thm:ForwardInvSmooth} that forward invariance of $\mathcal{C} \cap \Phi_0$ is ensured during the time interval $t \in [\tau_0, \tau_1)$ until the first exit time occurs as proven before.
    
    Now consider an arbitrary time interval $t \in I_k$ in which the system trajectory starts in the partition $\Phi_k$. Forward invariance is ensured in the previous time interval $I_{k-1}$ by assumption of the induction process and a global unique strong solution to \eqref{eq:SDE} exists, thus, due to the continuity of $h \ofx$, it follows that $h(\bm{x}_{\tau_k}) = h_{k-1}(\bm{x}_{\tau_k}) = h_{k}(\bm{x}_{\tau_k})$. Consequently, $h_k(\bm{x}_{\tau_k}) \geq 0$ and hence $\bm{x}_{\tau_k} \in \mathcal{C} \cap \Phi_k$. By application of Theorem \ref{thm:ForwardInvSmooth}, we know that there exists a control input $\bm{u}$ rendering $\mathcal{C} \cap \Phi_k$ forward invariant within the interval $I_k$ until the next exit time occurs. Hence by mathematical induction $\mathrm{Pr}\left[\bm{x}_t \in \mathcal{C} \cap \Phi_k, \hspace{0.2cm} \forall t \in I_k\right]=1$ is true for all positive integers $k$ which completes our proof.
    \end{proof}

    This concludes our theoretical analysis of forward invariance of non-smooth safe sets under stochastic dynamics. 

    \subsection{Control Synthesis}
    Although the theoretical analysis provides forward invariance results, it can happen that the resulting safety controller exhibits high frequency switching when the vector fields on either side of an edge $\partial \Phi_{ij}$ point in opposite directions which is not suitable for hardware implementations. In \cite{glotfelter2018boolean}, the almost-active set of functions $I_{\varepsilon}\ofx$ is introduced to ensure safety for smoothly composed deterministic Boolean non-smooth CBFs. We define a more general set as
    \begin{align}
        I_{\varepsilon}\ofx = \left\{i \in \mathbb{N} \mid \exists \partial \Phi_{ik} \land | h_i\ofx - h\ofx | \leq \varepsilon\right\}
    \end{align}
    which describes the set of barrier functions that are defined over neighboring partitions of the current active one, i.e. $\bm{x} \in \Phi_k$, and are close to the value of $h\ofx$ which occurs close to the boundary $\partial \Phi_{ik}$. 

    To synthesize safe control inputs under stochastic disturbances, we solve the quadratic program (QP)
    \begin{equation}
    \begin{aligned}
        \bm{u}^* = \argmin_{\bm{u} \in \mathcal{U}} \quad & \left(\bm{u} - \bm{u}_{\text{ref}}\right)^T \left(\bm{u} - \bm{u}_{\text{ref}}\right)\\
        \textrm{s.t.~~~} \quad & \frac{\partial B_i}{\partial \bm{x}} \left(\bm{f}\ofx + \bm{g} \ofx \bm{u}\right) +\frac{1}{2} \mathrm{tr}\left[\bm{\sigma}^T \frac{\partial^2 B_i}{\partial \bm{x}^2} \bm{\sigma}\right]\\
        ~\quad &  \leq \alpha_3\left(h_i\ofx\right), \hspace{0.5cm} \forall i \in I_{\varepsilon}\ofx.
    \end{aligned}
    \label{eq:QP}
    \end{equation}
    where $\bm{u}_{\text{ref}}$ is a reference control input and $h_i\ofx$ are the smooth barrier functions being active at the current state. This synthesis problem can be solved efficiently using off-the-shelf QP solvers. As a consequence of satisfying multiple barrier conditions, the control inputs generated on either side of a boundary do not vary as much which results in a smoother control signal which we further investigate in Sec.~\ref{sec:multiagent}. Furthermore, since the control input still satisfies Def.~\ref{def:NSCBF}, the derived forward invariance results are unaffected.

    
	\section{Simulation Study}
	We evaluate our NSCBFs in two different simulation scenarios in which non-smooth safe sets naturally occur, namely network connectivity constraints as well as a multi-agent collision avoidance task. We model the robots as stochastic single integrators of the form $\text{d}\bm{x} = \bm{u}~\text{d}t + \sigma \bm{I}~\text{d}\bm{\beta}\label{eq:agentdynamics}$
    where $\bm{x} = [p_x, p_y]^T$ is a 2D position, $\bm{u} \in \mathbb{R}^2$ is the control input and the variance is set to $\sigma = 0.025$. We pick $\alpha_1\ofx = \alpha_2\ofx = \alpha_3\ofx = \bm{x}$.

    \subsection{Single Agent with Boolean Safety Specification}
    In our first simulation scenario, we only consider a single agent which has a reference proportional controller navigating it from its initial position to a goal located at $\mathcal{G} = [1.8, 1]$. A safety specification is encoded as $(\mu_o\ofx \geq 0) \land \left(\mu_{n_1}\ofx \geq 0 \lor \mu_{n_2}\ofx \geq 0\right)$ where $\mu_o = \lVert \bm{x} - \mathcal{O} \rVert_2 - r_o$ is an obstacle avoidance constraint and $\mu_{n_1} = r_{n_1} - \lVert \bm{x} - \mathcal{N}_1 \rVert_2$ and $\mu_{n_2} = r_{n_2} - \lVert \bm{x} - \mathcal{N}_2 \rVert_2$ are static network connectivity constraints. The specific positions and radii are shown in Fig.~\ref{fig:Sim1}. Thus, the NSCBF can be constructed as
    \begin{align}
        h\ofx = \mathrm{min}\left(\mu_o\ofx, \mathrm{max}\left(\mu_{n_1}\ofx, \mu_{n_2}\ofx\right)\right).
    \end{align}
    By using $\min$ and $\max$ operators, it is ensured that $h\ofx \geq 0$ satisfies the Boolean safety specification. We simulate the proposed NSCBFs in Def.~\ref{def:NSCBF} in the Julia Programming language using the DifferentialEquations.jl package for solving SDEs \cite{rackauckas2017differentialequations}. To empirically validate that safety can be assured almost surely, we performed a monte carlo study in which simulated the scenario with the same initial conditions 500 times. The resulting trajectories with and without the proposed safety filter are shown in Fig.~\ref{fig:Sim1}. It can be seen that all trajectories of the robot satisfy both, the network connectivity constraints as well as the obstacle avoidance constraint. As a consequence of satisfying the safety constraint, the robot cannot reach its destination since it is outside the network coverage area. The average computation time for solving the QP took $t_c = 0.6 \pm 0.6$ milliseconds which shows that the controller can be easily run in real-time applications.
    \begin{figure}[t]
    \centering
    \includegraphics[scale=0.76]{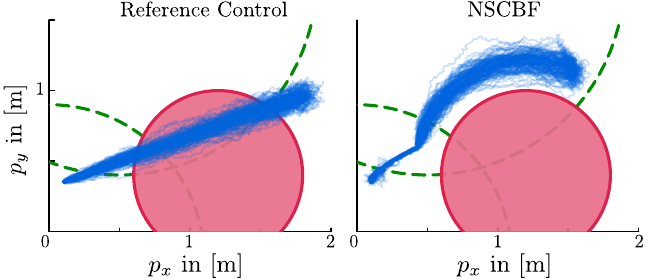}
    \caption{Simulation results for a Boolean safety specification. The network coverage areas are shown by green circles located at $\mathcal{N}_1 = [0, -0.2]^T$ and $\mathcal{N}_2 = [0.5, 1.8]^T$ with radii $r_1=1.1$ and $r_2=1.4$, respectively. The obstacle is located at $\mathcal{O}=[1.2, 0.4]^T$ with $r_o=0.6$ and illustrated as a red circle. The left plot shows the stochastic evolution of the system if only the reference proportional controller is used while the right plot depicts the behavior when using the proposed NSCBFs.}
    \label{fig:Sim1}
    \vspace{-0.7cm}
    \end{figure}
    \subsection{Multi Agent Collision Avoidance}
    \label{sec:multiagent}
    In our second simulation, we apply our method to a multi agent swapping task. In this scenario, $n_a$ agents are equidistantly distributed along the unit circle and their goal is to swap positions with the agent on the opposite side while avoiding collisions with all other agents. We formulate the safety specification as
    \begin{align}
        \bigwedge_{i=1}^{n_a} \bigwedge_{j = i+1}^{n_a} \left(\lVert \bm{x}_i - \bm{x}_j \rVert_2 \geq 2 r\right)
    \end{align}
    where $r$ denotes the collision radius of each agent. To synthesize control inputs, we define the joint state $\bar{\bm{x}} = \left[\bm{x}_1^T, \dots, \bm{x}_{n_a}^T\right]^T \in \mathbb{R}^{2n_a}$ as well as the joint dynamics. The NSCBF is constructed as
    \begin{align}
        h \ofx = \mathop{\min}_{\substack{i=1,\dots,n_a\\j=i,\dots, n_a}} \left(\lVert \bm{x}_i - \bm{x}_j \rVert_2 - 2r\right).
    \end{align}
    \begin{figure}[t]
    \centering
    \includegraphics[scale=0.7]{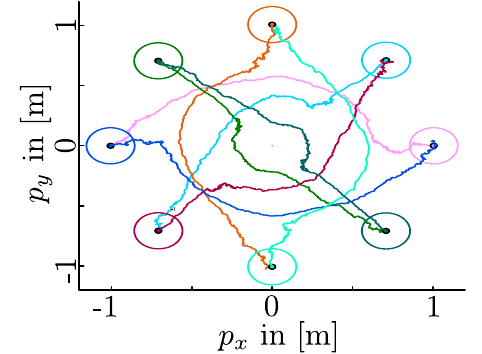}
    \caption{Simulation results for the multi agent swapping scenario. The agents initial positions are shown by the colored circles.}
    \label{fig:Sim2}
    \vspace{-0.6cm}
    \end{figure}
	Fig.~\ref{fig:Sim2} depicts the stochastic simulation of the system where it can be seen that all agents reach their destination while avoiding collisions with the other agents, thus satisfying the safety constraint. To emphasize the need for the almost-active set of functions used in control synthesis, we show the control inputs for $\varepsilon=0$ (only one barrier is active) and $\varepsilon=0.05$ in Fig.~\ref{fig:Sim3}. It can be observed that the almost-active set vastly reduces the effects of high frequency switching which makes the approach more applicable to real hardware.
	
	\section{CONCLUSIONS AND FUTURE WORK}
	Our work enables the design of safe controllers in the presence of stochastic disturbances on the system dynamics as well as non-smooth safe sets. The overall non-smooth safe set is decomposed into smooth partitions in which forward invariance can be guaranteed and it has been shown that transitions between partitions do not affect the forward invariance of the safe set. We have demonstrated our method in two challenging simulation scenarios of connectivity maintenance as well as multi agent collision avoidance and showcased that safety can be ensured at all times. In the future, we plan to use this theoretical result to apply NSCBFs in various settings where non-smoothnesses occur such as Signal Temporal Logic (STL) specifications or risk-aware safety specifications using particle filters.

	
	\bibliographystyle{IEEEtran}
	\bibliography{refs.bib}

\begin{thebibliography}{10}
\providecommand{\url}[1]{#1}
\csname url@samestyle\endcsname
\providecommand{\newblock}{\relax}
\providecommand{\bibinfo}[2]{#2}
\providecommand{\BIBentrySTDinterwordspacing}{\spaceskip=0pt\relax}
\providecommand{\BIBentryALTinterwordstretchfactor}{4}
\providecommand{\BIBentryALTinterwordspacing}{\spaceskip=\fontdimen2\font plus
\BIBentryALTinterwordstretchfactor\fontdimen3\font minus \fontdimen4\font\relax}
\providecommand{\BIBforeignlanguage}[2]{{%
\expandafter\ifx\csname l@#1\endcsname\relax
\typeout{** WARNING: IEEEtran.bst: No hyphenation pattern has been}%
\typeout{** loaded for the language `#1'. Using the pattern for}%
\typeout{** the default language instead.}%
\else
\language=\csname l@#1\endcsname
\fi
#2}}
\providecommand{\BIBdecl}{\relax}
\BIBdecl

\bibitem{ames2019control}
A.~D. Ames, S.~Coogan, M.~Egerstedt, G.~Notomista, K.~Sreenath, and P.~Tabuada, ``Control barrier functions: Theory and applications,'' in \emph{2019 18th European control conference (ECC)}.\hskip 1em plus 0.5em minus 0.4em\relax IEEE, 2019, pp. 3420--3431.

\bibitem{grandia2021multi}
R.~Grandia, A.~J. Taylor, A.~D. Ames, and M.~Hutter, ``Multi-layered safety for legged robots via control barrier functions and model predictive control,'' in \emph{International Conference on Robotics and Automation (ICRA)}.\hskip 1em plus 0.5em minus 0.4em\relax IEEE, 2021, pp. 8352--8358.

\bibitem{xu2018safe}
B.~Xu and K.~Sreenath, ``Safe teleoperation of dynamic uavs through control barrier functions,'' in \emph{International Conference on Robotics and Automation (ICRA)}.\hskip 1em plus 0.5em minus 0.4em\relax IEEE, 2018, pp. 7848--7855.

\bibitem{cortez2019control}
W.~S. Cortez, D.~Oetomo, C.~Manzie, and P.~Choong, ``Control barrier functions for mechanical systems: Theory and application to robotic grasping,'' \emph{IEEE Transactions on Control Systems Technology}, vol.~29, no.~2, pp. 530--545, 2019.

\bibitem{thrun2005probabilistic}
S.~Thrun, W.~Burgard, and D.~Fox, \emph{Probabilistic robotics}.\hskip 1em plus 0.5em minus 0.4em\relax Cambridge, Mass.: MIT Press, 2005.

\bibitem{clark}
A.~Clark, ``Control barrier functions for complete and incomplete information stochastic systems,'' in \emph{American Control Conference (ACC)}.\hskip 1em plus 0.5em minus 0.4em\relax IEEE, 2019, pp. 2928--2935.

\bibitem{glotfelter2017nonsmooth}
P.~Glotfelter, J.~Cort{\'e}s, and M.~Egerstedt, ``Nonsmooth barrier functions with applications to multi-robot systems,'' \emph{IEEE control systems letters}, vol.~1, no.~2, pp. 310--315, 2017.

\bibitem{ong2021network}
P.~Ong, B.~Capelli, L.~Sabattini, and J.~Cort{\'e}s, ``Network connectivity maintenance via nonsmooth control barrier functions,'' in \emph{Conference on Decision and Control (CDC)}.\hskip 1em plus 0.5em minus 0.4em\relax IEEE, 2021, pp. 4786--4791.

\bibitem{funada2020visual}
R.~Funada, M.~Santos, T.~Gencho, J.~Yamauchi, M.~Fujita, and M.~Egerstedt, ``Visual coverage maintenance for quadcopters using nonsmooth barrier functions,'' in \emph{International Conference on Robotics and Automation (ICRA)}.\hskip 1em plus 0.5em minus 0.4em\relax IEEE, 2020, pp. 3255--3261.

\bibitem{8404080}
L.~Lindemann and D.~V. Dimarogonas, ``Control barrier functions for signal temporal logic tasks,'' \emph{IEEE Control Systems Letters}, vol.~3, no.~1, pp. 96--101, 2019.

\bibitem{santoyo2021barrier}
C.~Santoyo, M.~Dutreix, and S.~Coogan, ``A barrier function approach to finite-time stochastic system verification and control,'' \emph{Automatica}, vol. 125, p. 109439, 2021.

\bibitem{black2023safety}
M.~Black, G.~Fainekos, B.~Hoxha, D.~Prokhorov, and D.~Panagou, ``Safety under uncertainty: Tight bounds with risk-aware control barrier functions,'' in \emph{International Conference on Robotics and Automation (ICRA)}.\hskip 1em plus 0.5em minus 0.4em\relax IEEE, 2023, pp. 12\,686--12\,692.

\bibitem{pereira2021safe}
M.~Pereira, Z.~Wang, I.~Exarchos, and E.~Theodorou, ``Safe optimal control using stochastic barrier functions and deep forward-backward sdes,'' in \emph{Conference on Robot Learning}.\hskip 1em plus 0.5em minus 0.4em\relax PMLR, 2021, pp. 1783--1801.

\bibitem{mazouz2022safety}
R.~Mazouz, K.~Muvvala, A.~Ratheesh~Babu, L.~Laurenti, and M.~Lahijanian, ``Safety guarantees for neural network dynamic systems via stochastic barrier functions,'' \emph{Advances in Neural Information Processing Systems}, vol.~35, pp. 9672--9686, 2022.

\bibitem{singletary2022safe}
A.~Singletary, M.~Ahmadi, and A.~D. Ames, ``Safe control for nonlinear systems with stochastic uncertainty via risk control barrier functions,'' \emph{IEEE Control Systems Letters}, vol.~7, pp. 349--354, 2022.

\bibitem{florchinger1997feedback}
P.~Florchinger, ``Feedback stabilization of affine in the control stochastic differential systems by the control lyapunov function method,'' \emph{SIAM Journal on Control and optimization}, vol.~35, no.~2, pp. 500--511, 1997.

\bibitem{ge2012non}
S.~S. Ge, T.~L. Vu, and C.~C. Hang, ``Non-smooth lyapunov function-based global stabilization for quantum filters,'' \emph{Automatica}, vol.~48, no.~6, pp. 1031--1044, 2012.

\bibitem{leobacher2017strong}
G.~Leobacher and M.~Sz{\"o}lgyenyi, ``A strong order 1/2 method for multidimensional sdes with discontinuous drift.'' \emph{Annals of applied probability: an official journal of the Institute of Mathematical Statistics}, vol.~27, no.~4, pp. 2383--2418, 2017.

\bibitem{so2023almost}
O.~So, A.~Clark, and C.~Fan, ``Almost-sure safety guarantees of stochastic zero-control barrier functions do not hold,'' \emph{arXiv preprint arXiv:2312.02430}, 2023.

\bibitem{doob}
J.~L. Doob, \emph{Stochastic Processes}.\hskip 1em plus 0.5em minus 0.4em\relax New York, John Wiley \& Sons, 1953.

\bibitem{glotfelter2018boolean}
P.~Glotfelter, J.~Cort{\'e}s, and M.~Egerstedt, ``Boolean composability of constraints and control synthesis for multi-robot systems via nonsmooth control barrier functions,'' in \emph{Conference on Control Technology and Applications (CCTA)}.\hskip 1em plus 0.5em minus 0.4em\relax IEEE, 2018, pp. 897--902.

\bibitem{rackauckas2017differentialequations}
C.~Rackauckas and Q.~Nie, ``Differential{E}quations.jl--a performant and feature-rich ecosystem for solving differential equations in {J}ulia,'' \emph{Journal of Open Research Software}, vol.~5, no.~1, 2017.

\end{thebibliography}

	 \begin{figure}[t]
    \centering
    \includegraphics[scale=4]{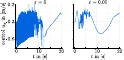}
    \caption{Exemplary control input $u_x$ of agent one over time for two different settings. On the left, only the barrier condition in the current partition is satisfied while the right figure shows the control signal for an almost-active set.}
    \label{fig:Sim3}
    \vspace{-0.6cm}
    \end{figure}
	
\end{document}